\newcommand{\x}{\mathcal X}
\newcommand{\xfree}{\mathcal X_{\text{free}}}
\newcommand{\xobs}{\mathcal X_{\text{obs}}}
\newcommand{\xgoal}{\mathcal X_{\text{goal}}}
\newcommand{\xinit}{x_{\mathrm{init}}}
\newcommand{\N}{\mathbb{N}}
\newcommand{\iterm}{i_\mathrm{term}}
\newtheorem{theorem}{Theorem}
\newtheorem{problem}{Problem}
\newcommand{\fmt}{FMT$^*$\xspace}
\newcommand{\gmt}{GMT$^*$\xspace}
\newcommand{\prm}{PRM$^*$\xspace}
\newcommand{\G}{\mathcal{G}}   
\newcommand{\dr}{\lambda}   
\newcommand{\vunexplored}{V_{\mathrm{unexplored}}}
\newcommand{\vopen}{V_{\mathrm{open}}}
\newcommand{\vclosed}{V_{\mathrm{closed}}}
\newcommand\blfootnote[1]{%
  \begingroup
  \renewcommand\thefootnote{}\footnote{#1}%
  \addtocounter{footnote}{-1}%
  \endgroup
}
\renewcommand{\baselinestretch}{0.955}
\begin{document}

\title{Group Marching Tree: Sampling-Based \\
Approximately Optimal Motion Planning on GPUs
}

\author{\IEEEauthorblockN{Brian Ichter}
\IEEEauthorblockA{Aeronautics \& Astronautics\\
Stanford University\\
Stanford, California 94305\\
Email: ichter@stanford.edu}
\and
\IEEEauthorblockN{Edward Schmerling}
\IEEEauthorblockA{Institute for Computational \&\\
Mathematical Engineering\\
Stanford University\\
Stanford, California 94305\\
Email: schmrlng@stanford.edu}
\and
\IEEEauthorblockN{Marco Pavone}
\IEEEauthorblockA{Aeronautics \& Astronautics\\
Stanford University\\
Stanford, California 94305\\
Email: pavone@stanford.edu}}

\maketitle

\begin{abstract}
This paper presents a novel approach, named the Group Marching Tree (\gmt) algorithm, to planning on GPUs at rates amenable to application within control loops, allowing planning in real-world settings via repeated computation of near-optimal plans.
\gmt, like the Fast Marching Tree (\fmt) algorithm, explores the state space with a ``lazy'' dynamic programming recursion on a set of samples to grow a tree of near-optimal paths.
\gmt, however, alters the approach of \fmt with approximate dynamic programming by expanding, in parallel, the group of all active samples with cost below an increasing threshold, rather than only the minimum cost sample.
This group approximation enables low-level parallelism over the sample set and removes the need for sequential data structures, while the ``lazy'' collision checking limits thread divergence---all contributing to a very efficient GPU implementation.
While this approach incurs some suboptimality, we prove that \gmt remains asymptotically optimal up to a constant multiplicative factor.
We show solutions for complex planning problems under differential constraints can be found in \texttildelow10~ms on a desktop GPU and \texttildelow30~ms on an embedded GPU, representing a significant speed up over the state of the art, with only small losses in performance.
Finally, we present a scenario demonstrating the efficacy of planning within the control loop (\texttildelow100~Hz) towards operating in dynamic, uncertain settings.
\blfootnote{This work was supported by a Qualcomm Innovation Fellowship and by NASA under the Space Technology Research Grants Program, Grant NNX12AQ43G. Brian Ichter was supported by the DoD NDSEG Program.}
\end{abstract} 


\IEEEpeerreviewmaketitle

\section{Introduction}\label{sec:intro}

Robotic systems are increasingly operating in real-world settings---away from the structure, repetition, and certainty of the factory floor---that require a robot to not only sense its environment and state in real time, but to react accordingly \cite{Lavalle2011}. 
Acting in these paradigms often necessitates motion plans be computed on the basis of limited state and environmental knowledge, both of which may vary rapidly as information is gathered and the robot's surroundings change.
A major challenge in this approach is thus replanning quickly, ideally up to the bound of the control feedback loop frequency (\texttildelow100~Hz), particularly for systems governed by dynamic constraints operating in complex environments.

Sampling-based motion planning has emerged as an especially successful paradigm for rapid planning in complex, high-dimensional, and unstructured environments \cite{Lavalle2006}, and it has been shown to extend well to planning with differential constraints \cite{SchmerlingJansonEtAl2015}\cite{SchmerlingJansonEtAl2015b}.
These methods
probe the state space with a set of samples to be connected, under the supervision of a collision detection module, to form a traversable graph representation of the free state space.
Sampling-based roadmap methods, such as the probabilistic roadmap algorithm (PRM) \cite{KavrakiSvestkaEtAl1996} and its asymptotically optimal variant \prm \cite{KaramanFrazzoli2011}, initially construct a graph where samples are connected to each of their near neighbors, provided the connection is collision-free.
A shortest path search is performed on the resulting roadmap to yield solutions (up to the resolution constraints of the underlying graph) to the optimal planning problem \cite{Lavalle2006}.
As these methods are limited in their speed by the initial graph building stage, variants have been developed that simultaneously construct the graph edges while searching, e.g., Lazy PRM \cite{BohlinKavraki2000}, which avoid performing any collision checks that are not required during the roadmap shortest path computation.
The Fast Marching Tree algorithm (\fmt) further reduces collision checking by implementing direct dynamic programming (as opposed to full shortest path search) while constructing a tree subgraph of a disk graph defined by connection cost, increasing performance particularly in complex, high-dimensional spaces \cite{JansonSchmerlingEtAl2015}.
Yet even with these advances, path plan computation times are often over an order of magnitude greater than the periods of controller loops and with the slowing growth rate of CPU computational power (due primarily to limited clock frequency), it is unlikely raw CPU power will soon bridge this gap.
We instead propose algorithm development for a different paradigm: parallel computing, with a particular focus on development for the interplay between algorithmic design and the many thousand core architectures of GPUs.
Unfortunately, while we seek a solution inspired by the dynamic programming literature for a single pair of start/goal states (the use case relevant to control loop planning), the inherently sequential nature of dynamic programming's minimum cost node expansion, which, e.g., \fmt is built on, complicates the necessary massive parallelization.

\emph{Statement of Contributions.}
In this work, we propose the use of approximate dynamic programming (ADP) methods that leverage algorithm parallelism for greater speed while incurring only a bounded degree of suboptimality.
We present the Group Marching Tree (\gmt) algorithm that, like the Fast Marching Tree algorithm (\fmt) \cite{JansonSchmerlingEtAl2015}, performs a ``lazy'' dynamic programming recursion on a set of samples in the state space to grow a tree of near-optimal cost-to-arrive paths. 
\gmt, however, varies from the approach of \fmt with ADP by expanding the tree, in parallel, from the group of all active samples with cost below a threshold, rather than only the minimum cost sample (essentially locally relaxing the principle of optimality). 
This group approximation enables low-level parallelism over the sample set and removes the need for sequential data structures, allowing for massive parallelization on GPUs.
The ``lazy'' collision checking further facilitates GPU implementation by limiting thread divergence at the lowest levels.
While these approximations do introduce some suboptimality, we prove that \gmt remains asymptotically optimal up to a constant multiplicative factor and demonstrate through numerical experiments that the empirical loss is well below the theoretical bound. 

We further discuss the implementation of \gmt on GPU architectures and show its application to several illustrative motion planning problems with differential constraints, for which we consider kinodynamic and nonholonomic planning.
These numerical experiments show that solution trajectories can be computed in \texttildelow10~ms on a consumer grade GPU and \texttildelow30~ms on an embeddable GPU; achieving computation times two orders of magnitude faster than a state-of-the-art CPU algorithm and an order of magnitude faster than a state-of-the-art GPU algorithm, again with only small performance losses. 
Lastly, we demonstrate the efficacy of planning within the control loop on a simplified quadrotor in a collapsing cave environment, with state disturbances and environmental dynamism. 

\emph{Related Work.} 
Previous works have addressed planning in real-world settings through a number of methods.
One approach is that of feedback motion planning, which traditionally defines a policy over the state space to allow the current state to be fed back into the controller \cite{Lavalle2011}. 
Unfortunately, the ephemeral nature of the planning environment complicates this process; while ideally these feedback plans would always reflect the current knowledge state, they may become quickly outdated and inaccurate if updating or recomputing plans is too computationally intensive.
Some methods exist to simplify this computation, such as \cite{LindemannLaValle2009}, which computes a field of guiding vectors over the entire free state space, however they generally must interpolate over the state space to define the local action and assume the state space can be easily represented.
The high-frequency replanning approach of \cite{SunPatilEtAl2015} uses a similar approach to our own by quickly replanning full trajectories. 
This work leverages parallelism to generate many rapidly-exploring random tree (RRT) trajectories, selecting the trajectory with the lowest collision probability at each computation step.
Our work focuses on construction of a single tree to allow planning within the loop at rates of \texttildelow100~Hz, rather than the 4~Hz considered in \cite{SunPatilEtAl2015}. 
Other works, e.g., RRT$^\text{X}$ \cite{OtteFrazzoli2016}, have accelerated the replanning approach by iteratively rewiring a single tree as new information becomes available. 
By reusing the previous tree at each time step, these methods are limited in scenarios where the environment changes drastically. 
Furthermore, the RRT$^\text{X}$ tree is rooted at the goal state to enable use in scenarios with disturbances, but this limits its utility in problems with a changing goal state.

Another approach to planning in changing environments is to couple low-frequency global planners with high-frequency reactive controllers that determine actions which are collision-free and optimal in a local sense. 
Using methods such as precomputed trajectory libraries and funnels \cite{MajumdarTedrake2012}, learning \cite{RichterVega-BrownEtAl2015}, and potential fields \cite{SchererSinghEtAl2008}, this approach has shown practical success in many settings, however, its focus on the local region can ignore variations in global reachability resulting from actions (e.g., not accounting for momentum or nonholonomic constraints) and their use of heuristics to inform actions may incur suboptimality (e.g., in maze-like environments).
In this work \gmt is shown to be capable of computing motion plans in a tempo comparable to reactive controllers, lessening the need for consideration of only local actions.
\gmt can further be used in concert with reactive controllers to better inform actions via accurate cost-to-go computations, thus potentially benefiting from properties like the robustness in \cite{MajumdarTedrake2012}.

A main tenet of our work is the use of approximate dynamic programming (ADP) to allow parallelism while exploring the state space. 
Similar search methodologies, i.e., expanding wavefronts in low-cost groups, have been used successfully for graph search to allow parallelism and complexity reduction.
Dial's algorithm \cite{Dial1969} implements Dijkstra's algorithm on graphs with integer weights by stepping through buckets, exactly solving for the shortest paths.
The $\Delta$-stepping algorithm \cite{MeyerSanders2003} generalizes Dial's algorithm to solve exactly the single source shortest path problem on non-negative real-value weighted graphs by successively relaxing edges while stepping through buckets with width $\Delta$, however it performs extra work by revisiting edges to maintain exactness.
The Group Marching Method \cite{Kim2001} builds on the Fast Marching Method (an inspiration for \fmt) to solve the eikonal equations by advancing a group of points together in two iterations, the first forward and the second backwards to correct for instabilities. 
Our work employs a similar expansion strategy, but abandons any additional computation necessary to maintain exactness, instead leveraging the underlying disk graph to maintain asymptotic optimality within a constant factor.

Parallelization too has been applied successfully to motion planning by a number of researchers, finding significant algorithm accelerations.
An early result in sampling-based motion planning showed that probabilistic roadmap methods are embarrassingly parallel \cite{AmatoDale1999}, which was later extended to implementation on GPUs \cite{PanLauterbachEtAl2010}.
The focus of PRM-based approaches on entire graph construction however can be prohibitively slow even with GPUs.
Common approaches to parallelization of sampling-based planning include focusing only on algorithm subroutines (such as collision checking and nearest neighbor search) \cite{PanLauterbachEtAl2010}\cite{BialkowskiKaramanEtAl2011}, 
adapting serial algorithms via AND/OR-parallelism \cite{SunPatilEtAl2015}\cite{DevaursSimeonEtAl2013},
or using load balancing and domain decomposition \cite{RodriguezDennyEtAl2013}\cite{FidelJacobsEtAl2014}.
\gmt's ADP is parallel at the sample level, meaning many of these methodologies (e.g., collision checking, domain decomposition) are applicable in implementation. 
Furthermore, this low-level parallelism enables massive parallelization for use on GPUs.

\emph{Organization.} The remainder of this document is organized as follows. 
Section~\ref{sec:setup} describes the problem setup. 
Section~\ref{sec:alg} discusses the \gmt algorithm and proves its asymptotic optimality up to a constant multiplicative factor. 
Section~\ref{sec:gpu} discusses its implementation on GPUs. 
Section~\ref{sec:mp} demonstrates the performance of \gmt with motion planning problems under differential constraints.
Lastly, Section~\ref{sec:concl} summarizes our findings and proposes directions for future work.

\section{Problem Setup}\label{sec:setup}

In this section and the next (which contains the description and analysis of the \gmt algorithm), for ease of exposition we consider the geometric planning problem---the problem, loosely speaking, of computing the shortest free path from an initial state to a goal region where any two states can be connected by a straight line. 
The problem is briefly overviewed here, but a full, detailed problem formulation can be found in \cite{JansonSchmerlingEtAl2015}.
Let $\x = [0,1]^d$ be the state space, where $d\in \mathbb{N},d\geq 2$. 
Let $\xobs$ be the obstacle space, $\xfree = \x \setminus \xobs$ be the free state space, $\xinit\in\xfree$ be the initial condition, and $\xgoal \subset \xfree$ be the goal region.
A path is said to be \emph{collision-free} if $\sigma(\tau) \in \xfree$ for all $\tau \in [0,1]$.
A path is said to be \emph{feasible} if it is collision-free, $\sigma(0) = \xinit$, and $\sigma(1) \in \xgoal$. 

\vspace{0.15cm}
\begin{problem}[Optimal path planning]
Given a path planning problem $(\xfree, \xinit, \xgoal)$ and a cost measure $c$, find a feasible path $\sigma^\ast$ such that $c(\sigma^\ast) = \min \{c(\sigma):\sigma$ is feasible$\}$. If no such path exists, report failure.
\end{problem}
\vspace{0.15cm}

For Section~\ref{sec:alg} we consider the cost measure $c(\sigma)$ as the arc length of $\sigma$ with respect to the Euclidean metric and write $\|y-x\|$ to denote the cost of the shortest path between $x, y \in \x$.\footnote{To accommodate alternative dynamics/costs we may instead consider $d(x,y)$, the result of solving an optimal two-point boundary value problem connecting $x$ to $y$, and replace any discussion of connection balls in Section~\ref{sec:alg} with the notion of bounded-cost reachable sets.} Though this is arguably the simplest formulation of robotic motion planning, we note the analytical distinction between (a) determining, for more general problem setups with differential constraints or alternative costs, whether a sample set in $\x$ admits a near-optimal trajectory as a sequence of local connections, and (b) arguing that a planning algorithm is capable of identifying such a high-quality solution given a sample set.
We refer the reader to our previous works \cite{SchmerlingJansonEtAl2015}\cite{SchmerlingJansonEtAl2015b}\cite{JansonIchterEtAl2015} for discussion on the first point, including expressions for local connection radii under both randomized and deterministic state space sampling, and abbreviate the relevant discussion (Theorems~\ref{thm:GMTpe} and~\ref{thm:GMTao}) in the present work. 
The novel theoretical contribution of this paper is establishing that \gmt, which achieves a high degree of parallelism in a single query approach unlike the \fmt and \prm algorithms analyzed in those works, still recovers asymptotically optimal paths (up to a constant factor) under the same sampling and connection radius assumptions (Theorem~\ref{thm:GMTbound}).
Our numerical experiments in Section~\ref{sec:mp} consider both kinodynamic and nonholonomic planning problems, specifically double integrator and Dubins airplane dynamics.

\section{The Group Marching Tree Algorithm}\label{sec:alg}

\subsection{\texorpdfstring{\gmt}{Group Marching Tree}}\label{sec:gmt}

We now detail the Group Marching Tree algorithm (\gmt) to be used to approximately solve the optimal path planning problem.
\gmt performs a ``lazy'', approximate dynamic programming recursion to grow a tree of paths in cost-to-arrive space.
This amounts to iteratively attempting to expand all samples in the tree branches below a constantly increasing cost threshold, rather than expanding only the minimum cost sample.
The resulting algorithm enables simultaneous graph building and exploration of the state space, in a manner amenable to complex planning problems, such as high-dimensional, cluttered environments and differential constraints.

A description of the algorithm is given in Alg.~\ref{alg:GMT}, with a single iteration visualized in Fig.~\ref{fig:gmt_exp}. The algorithm takes as input the planning problem $(\xfree, \xinit, \xgoal)$, a sample set $\vunexplored$ of $n$ samples in $\xfree$ (at least one in $\xgoal$), a connection radius $r$, and a group cost threshold factor $\dr\in(0,1]$. 
Together, $\dr$ and $r$ define the group cost threshold increment $\delta$, equal to $\dr r$.
We refer to nodes (or interchangeably, samples) as neighbors if the connection cost between them is less than the connection radius $r$, as defined in Theorem \ref{thm:GMTpe}; 
$r = 4(1+\eta)^{1/d}\big(1/d\big)^{1/d} \big(\mu(\xfree)/\zeta_d\big)^{1/d} \big(\log n/n\big)^{1/d}$ where $\eta \geq 0$ is a tuning parameter, $\mu(\xfree)$ denotes the $d$-dimensional Lebesgue measure of $\xfree$, and $\zeta_d$ denotes the volume of the unit ball in $d$-dimensional Euclidean space. This $r$ applies for geometric planning with Euclidean cost and $\vunexplored$ sampled uniformly randomly from $\xfree$; smaller $r$ may be considered if $\vunexplored$ is sampled with low-dispersion, deterministic sequences \cite{JansonIchterEtAl2015}.

\begin{algorithm}
\caption{Group Marching Tree Algorithm}
\label{alg:GMT}
\begin{algorithmic}[1]
{\small 
\REQUIRE connection radius $r$, group cost threshold factor $\dr$, \\ set $\vunexplored$ of $n$ samples in $\xfree$, at least one in $\xgoal$
\STATE Place $\xinit$ in $\vopen$, set $i=0$ and $\delta = \dr r$ \label{line:gmtinit}
\STATE Initialize tree with root node $\xinit$ \label{line:treeinit}
\STATE Find nodes $\G$ in $\vopen$ with cost $\leq i\delta$ \label{line:group}
\STATE For each unexplored neighbor, $x$, of any node in $\G$: \label{line:parfor}
\STATE $\quad$ Find neighbor nodes $y$ in $\vopen$ \label{line:y}
\STATE $\quad$ Find locally-optimal connection to $x$ from a node in $y$ \label{line:localOpt}
\STATE $\quad$ If that connection is collision-free: \label{line:collision}
\STATE $\quad \quad$ Add edge to tree \label{line:treeadd}
\STATE $\quad \quad$ Remove $x$ from $\vunexplored$ and add to $\vopen$ \label{line:x}
\STATE Remove $\G$ from $\vopen$ and add to $\vclosed$ \label{line:removegroup}
\STATE Increment $i$ \label{line:increment}
\STATE Skip to Line \ref{line:group} until either: \label{line:term} \\ 
$\quad${\footnotesize a:} $\vopen$ is empty $\Rightarrow$ return failure\\ 
$\quad${\footnotesize b:} A node in $\G$ is in $\xgoal \Rightarrow$ return min cost path to $\xgoal$ 
}
\end{algorithmic}
\end{algorithm}

The algorithm proceeds by expanding a tree of paths outward through the state space, maintaining the samples in three sets: $\vunexplored$, $\vopen$, and $\vclosed$.
$\vunexplored$ consists of samples not yet added to the tree. 
$\vopen$ consists of samples added to the tree and still considered for expansion; intuitively these are the samples on the tree's outer ``branches'', i.e., close to the wavefront.
$\vclosed$ consists of samples added to the tree and no longer considered for expansion; intuitively these are the samples too far from the edge of the expanding tree to make any new connections.
The algorithm begins by adding only $\xinit$ to $\vopen$, setting $\vclosed$ empty, and initializing the tree of paths with $\xinit$ at its root (Lines \ref{line:gmtinit}-\ref{line:treeinit}).  
At each iteration $i$, all nodes in $\vopen$ with cost below $i \dr r$ ($i\delta$) are placed into a set $\G$, denoting the group of samples to be expanded in parallel (Line \ref{line:group}). 
The amount of parallelism of this step is controlled by a tuning parameter $\dr$, referred to as the group cost threshold factor, which represents a trade-off between parallelism and potential unconsidered optimal connections; $\dr \to 1$ represents expanding all nodes in the open set at once, resulting in nearly a breadth first search, while $\dr \to 0$ represents expanding only the minimum cost nodes in a given iteration, resulting in the same final solution as \fmt. 
The other side of the tradeoff, the unconsidered optimal connections, arises when multiple nodes along the optimal path are considered in the same group expansion, meaning they cannot connect to each other.
They may also occur when paths formed from the concatenation of several short connections fall behind the wavefront.
These effects, however, are curtailed by the $\dr$ factor, the underlying disk graph's structure, and the notion that longer steps will generally be more favorable than many short steps.
Even with this suboptimality, we show in Theorem \ref{thm:GMTao} that asymptotic optimality within a constant multiplicative factor is maintained; furthermore, the performance loss observed in practice is studied in Section~\ref{sec:numexp} and shown to be small. 
With the group of samples $\G$ in hand, all neighbors of $\G$ in $\vunexplored$ are considered for addition to the tree (Line \ref{line:parfor}). 
Denoting each unexplored neighbor of samples in $\G$ by $x$ and the neighbors of $x$ in $\vopen$ by $y$, \gmt then selects the locally-optimal connection, where locally-optimal is defined as the connection with the lowest cost for the previously computed path to $y$ concatenated with the straight line path from $y$ to $x$ (Line \ref{line:localOpt}).
Note that while this step potentially introduces suboptimal connections by \emph{lazily} ignoring the presence of obstacles, as in \fmt, these connections become vanishingly rare as the number of samples goes to infinity, as discussed and proven in \cite{JansonSchmerlingEtAl2015}.
If the selected connection is collision-free, it is added to the tree and $x$ is removed from $\vunexplored$ and added to $\vopen$ (Lines \ref{line:collision}-\ref{line:x}). 
When all samples have been considered, $i$ is incremented and the samples in $\G$ are removed from $\vopen$ and added to $\vclosed$ (Lines \ref{line:removegroup}-\ref{line:increment}).
The algorithm then moves to the next iteration, beginning at Line \ref{line:group}, or terminates if either $\vopen$ is empty or a node in $\G$ is in $\xgoal$ (Line \ref{line:term}).

\begin{figure}[htb!]
\centering
\begin{subfigure}[b]{0.23\textwidth}
    \includegraphics[width=\textwidth]{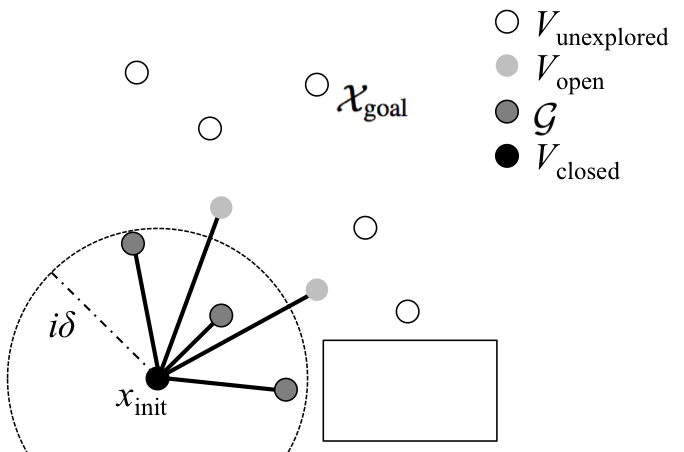}
    \caption{}\label{fig:gmt_exp1}
\end{subfigure}
\begin{subfigure}[b]{0.23\textwidth}
    \includegraphics[width=\textwidth]{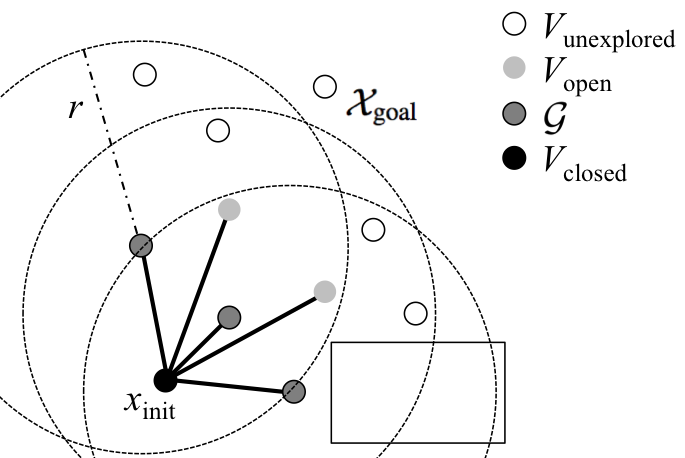}
    \caption{}\label{fig:gmt_exp2}
\end{subfigure}
\begin{subfigure}[b]{0.23\textwidth}
    \includegraphics[width=\textwidth]{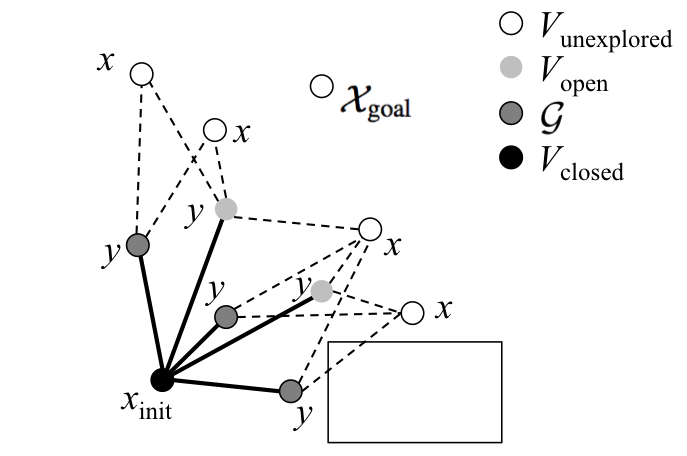}
    \caption{}\label{fig:gmt_exp3}
\end{subfigure}
\begin{subfigure}[b]{0.23\textwidth}
    \includegraphics[width=\textwidth]{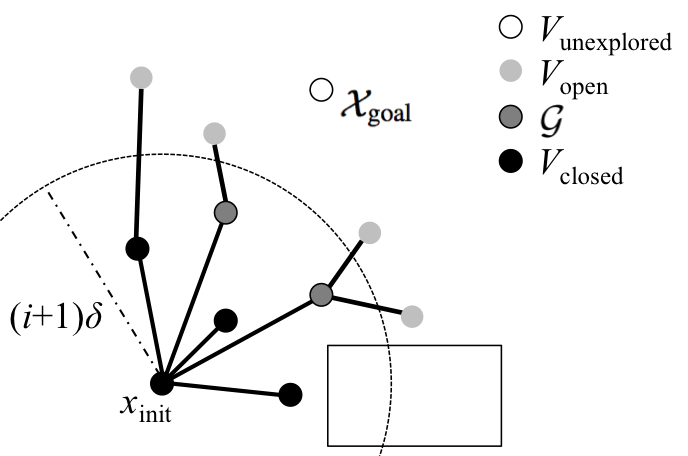}
    \caption{}\label{fig:gmt_exp4}
\end{subfigure}
\caption{One iteration step of \gmt expansion, labeled with (Fig., Line in Alg.~\ref{alg:GMT}). 
(\ref{fig:gmt_exp1}, Line \ref{line:group}) shows the selection of the group $\G$. 
(\ref{fig:gmt_exp2}, Line \ref{line:parfor}) shows the selection of the nearest neighbors of $\G$ in $\vunexplored$.
(\ref{fig:gmt_exp3}, Line \ref{line:y}) shows the candidate connections from which the locally optimal is chosen to connect the new samples.
(\ref{fig:gmt_exp4}, Line \ref{line:group}) shows the new tree after an iteration, Lines \ref{line:group}-\ref{line:increment}, and the new group $\G$.
Note, each group relies only on pathwise cost; the $i\delta$ ball is only shown for illustrative purposes.
}
\label{fig:gmt_exp}
\end{figure}

\subsection{\texorpdfstring{\gmt}{Group Marching Tree} Approximate Asymptotic Optimality}\label{sec:pe}

Our analysis of \gmt begins with the concept of \emph{probabilistic exhaustivity} as applied in related work establishing asymptotic optimality for a range of geometric \cite{StarekGomezEtAl2015} and differentially constrained \cite{SchmerlingJansonEtAl2015}\cite{SchmerlingJansonEtAl2015b} batch-processing, sampling-based motion planning algorithms. Briefly, probabilistic exhaustivity is the notion that within a sufficiently large set of uniformly sampled states, a sequence of samples approximating \emph{any} path arbitrarily well may be found. 
This property may be used to construct sample sequences approaching the optimal solution that are amenable for recovery by a planning algorithm.

In the subsequent analysis, we define $\mathtt{SampleFree}(n)$ to be a function that returns $n$ points sampled independently and identically from the uniform distribution on $\xfree$, at least one of which is in $\xgoal$.
We define a path $\sigma : [0,1] \to \x$ and a path $y : [0,1] \to \x$ that sequentially connects the sequence of waypoints $\{y_m\}^M_{m=0} \in \x$ with line segments. 
We say the sequence of waypoints $\{y_m\}$ $(\epsilon,r)\mathit{-traces}$ the path $\sigma$ if the following conditions hold: (i) $\lvert\lvert y_m - y_{m+1} \rvert\rvert \leq r$ for all $m$, (ii) the cost of $y$ is bounded as $c(y) \leq (1+\epsilon)c(\sigma)$, and (iii) the distance from any point $y$ to $\sigma$ is no more than $r$. 
We formally state this in Theorem \ref{thm:GMTpe} (proven as Theorem IV.5 in \cite{SchmerlingJansonEtAl2015}).

\begin{theorem}[Probabilistic Exhaustivity]\label{thm:GMTpe}
Define a planning problem $(\xfree, \xinit, \xgoal)$, a feasible path $\sigma : [0,1] \to \xfree$, a set of samples $V = \{\xinit\} \cup \mathtt{SampleFree}(n)$, and $\epsilon > 0$. 
For a fixed $n$ consider the event $\mathcal{A}_n$ that there exists $\{y_m\}^M_{m=0} \in V$, $y_0 = \xinit$, $y_M \in \xgoal$, which $(\epsilon,r)\mathit{-trace}$ $\sigma$, where
$$r = 4(1+\eta)^{\frac{1}{d}}\bigg(\frac{1}{d} \bigg)^{\frac{1}{d}} \bigg(\frac{\mu(\xfree)}{\zeta_d}\bigg)^{\frac{1}{d}}\bigg(\frac{\log n}{n}\bigg)^{\frac{1}{d}}$$
with $\eta \geq 0$. 
Then, as $n \to \infty$, the probability that $\mathcal{A}_n$ does not occur (denoted by its complement $\mathcal{A}^c_n$) is asymptotically bounded as P$[\mathcal{A}^c_n] = \text{O} (n^{-\frac{n}{d}}\log^{-\frac{1}{d}}n)$.
\end{theorem}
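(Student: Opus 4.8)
The plan is to prove the existence event $\mathcal{A}_n$ constructively and then bound the probability that the construction fails. I would build the candidate trace by a covering argument along $\sigma$. First, discretize $\sigma$ into a chain of target points $x_0=\xinit,x_1,\dots,x_M$ at arc-length spacing $s=\Theta(r)$, with $x_M=\sigma(1)\in\xgoal$; around each interior target inscribe a ball $B_m=B(x_m,\rho)$ of radius $\rho=\theta r$ for a fraction $\theta\in(0,1)$ to be fixed. Taking $y_0=\xinit$, $y_M$ the sample guaranteed in $\xgoal$, and $y_m$ any sample of $V$ lying in $B_m$, the complement event $\mathcal{A}_n^c$ is contained in the event that some $B_m$ holds no sample.

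Next I would verify, deterministically, that every such selection $(\epsilon,r)$-traces $\sigma$; this fixes the admissible $s$ and $\theta$. Condition (i) holds since $\|y_{m+1}-y_m\|\le\|x_{m+1}-x_m\|+2\rho\le s+2\theta r$, which is $\le r$ once $s+2\theta r\le r$. Condition (iii) holds because each segment $[y_m,y_{m+1}]$ lies within $\rho$ of the chord $[x_m,x_{m+1}]$, hence within $\rho+s/2\le r$ of $\sigma$. Condition (ii) is the binding one: summing gives $c(y)\le c(\sigma)+2\rho M\le(1+2\theta r/s)\,c(\sigma)$, so the inflation stays below $\epsilon$ only when $\theta r/s$ is of order $\epsilon$.

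Then I would bound the failure probability by a union bound over empty balls, $P[\mathcal{A}_n^c]\le M\,(1-\mu(B_m\cap\xfree)/\mu(\xfree))^n\le M\exp(-n\zeta_d\rho^d/\mu(\xfree))$, with $M=\Theta(c(\sigma)/s)=\Theta(n^{1/d}\log^{-1/d}n)$. Substituting $\rho=\theta r$ and the prescribed $r^d=4^d(1+\eta)(1/d)(\mu(\xfree)/\zeta_d)(\log n/n)$ collapses the exponential to $n^{-(4\theta)^d(1+\eta)/d}$, giving $P[\mathcal{A}_n^c]=O\!\left(n^{1/d-(4\theta)^d(1+\eta)/d}\log^{-1/d}n\right)$. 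The advertised $O(n^{-\eta/d}\log^{-1/d}n)$ then appears exactly at $\theta=1/4$, the value for which the factor $4$ in $r$ calibrates the ball radius to $r/4$.

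The main obstacle, and the part I expect to require the most care, is reconciling the competing demands of the last two steps. Matching the target exponent pins $\theta=1/4$, yet the crude cost bound in condition (ii) then only secures $\epsilon\gtrsim1$; driving $\epsilon\to0$ with the naive per-segment slack $2\rho$ would force $\theta=\Theta(\epsilon)$ and degrade the exponent to $1/d-(4\theta)^d(1+\eta)/d$, which need not even be negative. The hard part is therefore a sharper cost accounting that retains a large ball radius: exploiting the near-cancellation of the tangential components of the perturbations $y_m-x_m$ along nearly collinear chords (making the overhead second order in $\rho/s$), or selecting the waypoints as a near-optimal subsequence drawn from a denser net, together with a joint choice of $\eta$, $\theta$, and $s$ tuned to $\epsilon$. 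A secondary technical wrinkle is that a ball centered on $\sigma$ may clip $\xobs$, so the bound $\mu(B_m\cap\xfree)\ge\zeta_d\rho^d$ must be rescued by a clearance argument---approximating $\sigma$ by paths bounded away from $\xobs$, or restricting to robustly feasible paths---to keep each ball's free volume uniformly bounded below.
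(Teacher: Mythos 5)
First, note that the paper does not actually prove this statement: Theorem~\ref{thm:GMTpe} is imported verbatim, with the parenthetical ``(proven as Theorem IV.5 in \cite{SchmerlingJansonEtAl2015})'', so the only in-paper ``proof'' is a citation. Your covering-plus-union-bound strategy is the same overall strategy used in that line of work, and your probability arithmetic is both correct and illuminating: with balls of radius $\theta r$ the empty-ball probability is $n^{-(4\theta)^d(1+\eta)/d}$, the union over $M = \Theta((n/\log n)^{1/d})$ balls forces $\theta \geq 1/4$, and at $\theta = 1/4$ you recover $O(n^{-\eta/d}\log^{-1/d}n)$ --- which also correctly diagnoses that the exponent $n^{-n/d}$ printed in the theorem must be a typo for $n^{-\eta/d}$, and explains where the factor $4$ in $r$ comes from.

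The gap is the one you yourself flag, and it is genuine rather than cosmetic: condition~(ii) for \emph{arbitrary} $\epsilon > 0$ is the substantive content of the lemma, and your construction does not deliver it. With $\theta = 1/4$ and spacing $s \leq r - 2\rho = r/2$, the crude bound $c(y) \leq \sum_m(\|x_{m+1}-x_m\| + 2\rho) \leq c(\sigma)(1 + 2\rho/s)$ gives only $c(y) \leq 2c(\sigma)$, i.e.\ $\epsilon = 1$; shrinking $\theta$ to $\Theta(\epsilon)$ destroys the exponent, and the second-order/telescoping refinement you suggest still leaves a constraint of the form $\theta(\sqrt{2/\epsilon}+2) \leq 1$, which again forces $\theta < 1/4$ for small $\epsilon$ (and additionally requires regularity of $\sigma$ to control the total turning of the chord directions, which a merely rectifiable feasible path need not have). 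So the tension you identify cannot be resolved within the single-scale ``each ball is nonempty'' framework as written; you would need to import the finer cost-accounting argument from the cited proof (or an equivalent) to close it. A secondary point you also only wave at: as stated, the theorem places no clearance assumption on $\sigma$, so the lower bound $\mu(B_m \cap \xfree) \geq \zeta_d\rho^d$ can fail; in the paper this is handled downstream, in Theorem~\ref{thm:GMTao}, by applying exhaustivity to $\delta_{\text{obs}}$-clear approximations of $\sigma^\ast$ rather than to $\sigma^\ast$ itself, and a self-contained proof should make that restriction explicit. In short: right skeleton, correct probability bound, but the $\epsilon$-dependence of the cost condition --- the heart of the lemma --- is left open.
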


We now show that the cost of the path returned by \gmt is bound within a constant multiplicative factor of the cost of any such sequence of tracing waypoints.

\begin{theorem}[Bounded Suboptimality]\label{thm:GMTbound} Let $r > 0$ and suppose that the sequence of waypoints $\{y_m\}_{m=0}^M \subset \xfree$ satisfies $y_0 = \xinit$, $y_M \in \xgoal$, $\|y_m - y_{m-1}\| \leq r$ for all $m \in \{1,\dots,M\}$ and $B(y_m, r) \subset \xfree$ for all $m \in \{0,\dots,M\}$. Let $c_\text{\gmt}$ denote the cost of the path returned by \gmt using a connection radius $r$ and group cost threshold factor $\lambda$. Then
\[
c_\text{\gmt} \leq (1 + 2\lambda) \sum_{k=1}^{M} \|y_k - y_{k-1}\|.
\]
\end{theorem}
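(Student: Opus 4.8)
The plan is to track, for each waypoint $y_m$, its cost-to-arrive $\alpha_m$ in the tree grown by \gmt (well defined and fixed once $y_m$ is added, since \gmt never rewires), and to prove the pointwise bound $\alpha_m \le (1+2\lambda)\sum_{k=1}^m \|y_k-y_{k-1}\|$ by induction on $m$. First I would reduce the theorem to this claim. Because the hypothesis $B(y_m,r)\subset\xfree$ guarantees that the straight segment from $y_m$ to \emph{any} point within distance $r$ lies in $\xfree$, every candidate connection examined on Line~\ref{line:localOpt} when $y_m$ is the unexplored neighbor $x$ is automatically collision-free, so the lazy check on Line~\ref{line:collision} never prevents $y_m$ from being added; this removes collision checking from the analysis entirely. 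A short argument about termination (Line~\ref{line:term}) then gives $c_\text{\gmt}\le\alpha_M$, since the first goal node to enter a group does so at cost no larger than $\alpha_M$.

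For the induction I would isolate the single place where \gmt can deviate from the behaviour of \fmt. The central structural observation is that $y_{m-1}$ can never be moved to $\vclosed$ strictly before $y_m$ is added: were that the case, $y_m$ would have been an unexplored neighbour of $y_{m-1}\in\G$ at $y_{m-1}$'s expansion and hence added then. Consequently, at the moment $y_m$ is added there are only two possibilities. Either $y_{m-1}$ is currently in $\vopen$, in which case the locally-optimal connection rule gives the clean \fmt-style recursion
\[
\alpha_m \le \alpha_{m-1} + \|y_m - y_{m-1}\|,
\]
or $y_{m-1}$ and $y_m$ enter the tree in the \emph{same} expansion wave and are therefore unavailable as connection targets for one another. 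This second case is precisely the ``unconsidered optimal connection'' anticipated in the description of \gmt, and it is the only source of suboptimality.

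To control the second case I would use the threshold geometry of the group expansion: a node expanded at iteration $i$ has cost-to-arrive at most $i\delta$ and, generically, greater than $(i-1)\delta$, where $\delta=\lambda r$. Thus waypoints that collide within a common wave have their costs-to-arrive confined to a band of width on the order of $\delta$, so a missed connection inflates $\alpha_m$ over $\alpha_{m-1}$ by at most about $\delta$ rather than by the ideal increment $\|y_m-y_{m-1}\|$. Summing the good increments reproduces $\sum_k\|y_k-y_{k-1}\|$, and the accumulated wave overshoots must be shown to contribute at most $2\lambda\sum_k\|y_k-y_{k-1}\|$. \textbf{The hard part} will be making this last bookkeeping tight enough to land exactly at the factor $(1+2\lambda)$: the overshoots are naturally measured per expansion wave in units of $\delta$, whereas the target is multiplicative in the waypoint cost, so I must relate the number of waves the tracing path passes through to its total cost $\sum_k\|y_k-y_{k-1}\|$ and, in particular, handle runs of many short steps that share a single wave, where charging a full $\delta$ to each step would be far too lossy. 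Carefully charging each wave's overshoot to the portion of waypoint cost it spans, and combining with the good-case recursion, then yields $\alpha_M\le(1+2\lambda)\sum_{k=1}^M\|y_k-y_{k-1}\|$ and hence the theorem.
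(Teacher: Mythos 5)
Your skeleton matches the paper's: induction over the waypoints, a case split on whether $y_{m-1}$ is available in $\vopen$ when $y_m$ is first considered, and the observation that $B(y_m,r)\subset\xfree$ neutralizes the lazy collision check. But the step you explicitly defer---``handle runs of many short steps that share a single wave, where charging a full $\delta$ to each step would be far too lossy''---is precisely the idea the proof cannot do without, and it is resolved not by a cleverer charging scheme but by a preprocessing step you are missing: the paper assumes, without loss of generality, that $\|y_m-y_{m-2}\|>r$ for all $m$, enforced by a forward pass that deletes any $y_{m-1}$ violating this (which preserves $\|y_m-y_{m-1}\|\le r$ and only decreases $\sum_k\|y_k-y_{k-1}\|$). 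After this reduction consecutive steps satisfy $\|y_m-y_{m-1}\|+\|y_{m-1}-y_{m-2}\|>r$, whence $S_M\ge Mr/2=M\delta/(2\dr)$; that single inequality is what converts an \emph{additive} overshoot of $\delta$ per waypoint into the \emph{multiplicative} factor $2\dr$, and it is applied only once, at $m=M$. Correspondingly, the invariant you should carry through the induction is additive, $c(y_m)\le S_m+(m-1)\delta$, not the pointwise multiplicative bound $\alpha_m\le(1+2\dr)S_m$ you propose, which is not provable for intermediate $m$ without the spacing reduction.

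There are two further gaps in the invariant itself. First, your dichotomy at the moment $y_m$ is added omits the case where \gmt terminates (a goal node enters a group) before $y_m$ is ever considered; then $\alpha_m=\infty$ and the induction must certify $c_\text{\gmt}$ directly, so the correct inductive statement is a disjunction: either $c_\text{\gmt}\le S_m+m\delta$ already, or $y_m$ is in the tree with $c(y_m)\le S_m+(m-1)\delta$ \emph{and} was added by iteration $\lceil S_{m-1}/\delta\rceil+m$. Second, that iteration bound must be part of the hypothesis: it is what bounds $\iterm$ (hence $c_\text{\gmt}\le\iterm\delta$) in the early-termination case, and what bounds $c(z)\le i'\delta$ for the foreign parent $z$ in the case where $y_m$ is reached while $y_{m-1}$ is still unexplored---a case your ``same expansion wave'' description conflates with, but which also covers $y_m$ entering the tree strictly before $y_{m-1}$. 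There the paper obtains $c(y_m)\le i'\delta+r$ and again invokes $r<\|y_m-y_{m-2}\|$ to absorb the stray $+r$ into $S_m-S_{m-2}$. So the spacing preprocessing is used twice, and without it, and without tracking the entry iteration $i_m$, the induction does not close.
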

\begin{proof}

In the subsequent analysis we will refer to the open set of nodes at iteration $i$ as $\vopen(i)$.
We first assume, without loss of generality, that $\|y_m - y_{m-2}\| > r$ for all $m \in \{2,\dots,M\}$. This condition may be enforced by making a forward pass over the sequence and omitting the preceding point $y_{m-1}$ for any $y_m$ that violates the condition, without affecting the other lemma assumptions and only decreasing $\sum_{k=1}^{M} \|y_k - y_{k-1}\|$. Consider running \gmt to completion and for each $y_m$ let $c(y_m)$ denote the cost-to-arrive of $y_m$ in the generated tree. If $y_m$ is not contained in any edge of the tree, we set $c(y_m)=\infty$.
Let $i_m$ denote the first iteration after which $y_m$ has been added to the \gmt tree, that is,
\begin{align*}
i_m &= \min\{i \in \N \mid y_m \in \vopen(i)\}.
\end{align*}

Define $S_0 = 0$, $S_m = \sum_{k=1}^{m} \|y_k - y_{k-1}\|$, the cost of the path connecting $y_0, y_1, \dots, y_m$, and denote $\delta = \lambda r$. We show by induction that for all $m \in \{1,\dots,M\}$, one of the following two possibilities must hold:
\begin{equation}\label{eqn:DPclaim}
c_\text{\gmt} \leq S_m + m \delta,
\end{equation}
or
\begin{equation}\label{eqn:stepbound}
c(y_m) \leq S_m + (m-1) \delta \ \ \mathrm{and}\ \  i_m \leq \left\lceil \frac{S_{m-1}}{\delta}\right\rceil + m.
\end{equation}

The base case $m=1$ is trivial, since $\G_0$ contains only $\xinit = y_0$, and thus the first \gmt iteration makes every collision-free connection between $\xinit$ and the nodes contained in $B(\xinit, r)$, including $y_1$. Then $c(y_1) = \|y_1 - y_0\| = S_1$ and $i_1 = 1 = \lceil S_0 / \delta\rceil + 1$. Now suppose \eqref{eqn:DPclaim} or \eqref{eqn:stepbound} holds for $m-1$; that means that one of the following four statements must hold.
\begin{itemize}
\item[1.] $c_\text{\gmt} \le S_{m-1} + (m-1)\delta$,
\item[2.] $c(y_{m-1}) \leq S_{m-1} + (m-2)\delta$ and\\
$i_{m-1} \leq \left\lceil \frac{S_{m-2}}{\delta}\right\rceil + m - 1$ and
\begin{itemize}
\item[a.] \gmt ends before considering $y_m$ for connection ($i_m = \infty$), or
\item[b.] $y_{m-1} \in \vopen$ when $y_m$ is first considered for connection ($i_{m-1} \leq i_m - 1$), or
\item[c.] $y_{m-1} \notin \vopen$ when $y_m$ is first considered for connection ($i_{m-1} \geq i_m$).
\end{itemize}
\end{itemize}
We show that in each of these cases, \eqref{eqn:DPclaim} or \eqref{eqn:stepbound} holds for $m$.

\noindent Case 1: Since $\|y_m - y_{m-1}\| \geq 0$, we have
\begin{align*}
c_\text{\gmt} &\le S_{m-1} + (m-1)\delta \le S_m + m\delta.
\end{align*}

\noindent Case 2a:
The fact that $y_m$ goes unconsidered means that up until the time the algorithm terminates at iteration $\iterm$ (upon finding a goal point in group $\G_{\iterm}$), $y_{m-1}$ has never been a member of an expansion group:
\begin{align*}
\iterm &\leq \max\{i_{m-1}, \lceil c(y_{m-1}) / \delta\rceil\}\\
       &\leq \max\{\lceil S_{m-2} / \delta\rceil + m-1, \lceil c(y_{m-1}) / \delta\rceil\}.
\end{align*}
Then since the algorithm terminates at iteration $\iterm$,
\begin{align*}
c_\text{\gmt} \leq \iterm\delta &\leq \max\{S_{m-2} + \delta + (m-1)\delta, c(y_{m-1}) + \delta\} \\
    &\leq S_{m-1} + m \delta.
\end{align*}

\noindent Case 2b: $y_m$ must be connected to some parent when it is first considered and $y_{m-1}$ is a candidate, so
\[
c(y_m) \le c(y_{m-1}) + \|y_{m-1} - y_{m}\| \le S_m + (m-2)\delta.
\]
Depending on whether $y_{m-1}$ spends one or more steps in $\vopen$, either $i_m = i_{m-1} + 1$ or $i_m = \lceil c(y_{m-1}) / \delta \rceil + 1$. In the first subcase, 
$
i_m \leq \left\lceil \frac{S_{m-2}}{\delta}\right\rceil + m
$;
in the second subcase,
$
i_m \leq \lceil S_{m-1} / \delta \rceil + m - 1.
$

\noindent Case 2c: When $y_m$ is first considered for connection during iteration $i' < i_m \leq i_{m-1}$, there must be some $z \in \G_{i'}$ such that $\|y_m - z\| \leq r$. Then
\begin{align*}
c(y_m) & \leq c(z) + r \leq {i'}\delta + r \leq (i_{m-1} - 1)\delta + r\\
       & \leq S_{m-2} + (m-1)\delta + r.
\end{align*}
Recalling that the $y_m$, by construction, are spaced so that they satisfy the property $r < \|y_m - y_{m-2}\| \leq \|y_m - y_{m-1}\| + \|y_{m-1} - y_{m-2}\|$, we have
$
c(y_m) \leq S_m + (m-1)\delta.
$
Additionally, $i_m \leq i_{m-1} \leq \left\lceil \frac{S_{m-2}}{\delta}\right\rceil + m - 1$.

Thus the inductive step holds in all cases and \eqref{eqn:DPclaim} or \eqref{eqn:stepbound} holds for all $m$. In particular taking $m = M$ we have $c_\text{\gmt} \leq S_M + M\delta$ or $c_\text{\gmt} \leq c(y_M) \leq S_M + (M-1)\delta$. Noting that $S_M \geq Mr/2 = M\delta/(2\lambda)$, we have
\[
c_\text{\gmt} \leq (1 + 2\dr) \sum_{k=1}^{M} \|y_k - y_{k-1}\|
\]
as desired.
\end{proof}

Given these results, we are now in a position to prove asymptotic optimality within a suboptimality bound.

\begin{theorem}[\gmt Approximate Asymptotic Optimality]\label{thm:GMTao}
Assume a $\delta_\text{obs}$-robustly feasible\footnote{Briefly, we require that $\sigma^\ast$ is a limit of paths with bounded clearance from $\xobs$; this may be regarded as a minimum regularity assumption to guard against problems with passages of infinitesimal width that are not amenable to sampling-based motion planning.} path planning problem, as defined in \cite{SchmerlingJansonEtAl2015}, with optimal path $\sigma^\ast$ and cost $c^\ast$. Let $c_n$ denote the cost returned by \gmt with $n$ samples.
Then for any $\epsilon > 0$,
$$\lim_{n\to\infty} \text{P}[c_{n} > (1+\epsilon)(1+2\dr)c^\ast] = 0.$$
\end{theorem}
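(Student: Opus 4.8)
The plan is to chain the two preceding results. Probabilistic exhaustivity (Theorem~\ref{thm:GMTpe}) supplies, with probability tending to $1$, a waypoint sequence drawn from the samples that traces a near-optimal feasible path; bounded suboptimality (Theorem~\ref{thm:GMTbound}) then guarantees that \gmt returns a path of cost at most $(1+2\dr)$ times the cost of that sequence. The role of the $\delta_\text{obs}$-robustness hypothesis is to bridge the gap between the (possibly zero-clearance) optimal path $\sigma^\ast$ and the clearance condition $B(y_m, r) \subset \xfree$ that Theorem~\ref{thm:GMTbound} requires. Concretely, I would first fix $\epsilon > 0$, choose auxiliary constants $\epsilon_1, \epsilon_2 > 0$ with $(1+\epsilon_1)(1+\epsilon_2) \leq 1 + \epsilon$, and invoke $\delta_\text{obs}$-robust feasibility to select a single feasible path $\sigma$ having strictly positive clearance $\delta_{\mathrm{clear}} > 0$ from $\xobs$ and cost $c(\sigma) \leq (1+\epsilon_2)c^\ast$. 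Crucially, $\delta_{\mathrm{clear}}$ is fixed before letting $n$ grow.

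Next I would apply Theorem~\ref{thm:GMTpe} to this $\sigma$ with parameter $\epsilon_1$. Let $\mathcal{A}_n$ denote the event that the samples contain waypoints $\{y_m\}_{m=0}^M$ which $(\epsilon_1, r)$-trace $\sigma$, i.e. $y_0 = \xinit$, $y_M \in \xgoal$, consecutive spacing at most $r$, every waypoint within distance $r$ of $\sigma$, and total cost $\sum_{k=1}^M \|y_k - y_{k-1}\| \leq (1+\epsilon_1)c(\sigma)$; the theorem gives $\mathrm{P}[\mathcal{A}_n^c] \to 0$. Since $r \to 0$ as $n \to \infty$, for all $n$ large enough we have $2r < \delta_{\mathrm{clear}}$. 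On $\mathcal{A}_n$, any point of $B(y_m, r)$ lies within distance $2r$ of $\sigma$, hence at distance at least $\delta_{\mathrm{clear}} - 2r > 0$ from $\xobs$, so $B(y_m, r) \subset \xfree$ for every $m$. This verifies all hypotheses of Theorem~\ref{thm:GMTbound} for the waypoint sequence.

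Finally, on $\mathcal{A}_n$ and for such large $n$, Theorem~\ref{thm:GMTbound} yields
\[
c_n = c_\text{\gmt} \leq (1+2\dr)\sum_{k=1}^M \|y_k - y_{k-1}\| \leq (1+2\dr)(1+\epsilon_1)c(\sigma) \leq (1+\epsilon)(1+2\dr)c^\ast .
\]
Consequently the event $\{c_n > (1+\epsilon)(1+2\dr)c^\ast\}$ is contained in $\mathcal{A}_n^c$ for all sufficiently large $n$, and since $\mathrm{P}[\mathcal{A}_n^c] \to 0$ the claimed limit follows.

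The step I expect to be the main obstacle is the robustness-to-clearance-to-ball-condition chain, which must be handled with care regarding the order of quantifiers: one has to commit to the clearance $\delta_{\mathrm{clear}}$ of the chosen near-optimal path \emph{before} sending $n \to \infty$, so that the shrinking connection radius $r$ is eventually dominated by it, and then combine the tracing distance bound with the clearance to certify $B(y_m, r) \subset \xfree$ uniformly over all $m$. Once this is in place, the remaining work — distributing the two multiplicative error factors so that their product stays below $1+\epsilon$, and observing that $c_n = c_\text{\gmt}$ on the good event — is routine.
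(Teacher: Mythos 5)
Your proposal is correct and follows essentially the same route as the paper, which gives only a one-sentence sketch deferring to Theorem VI.2 of \cite{SchmerlingJansonEtAl2015}: combine probabilistic exhaustivity (Theorem~\ref{thm:GMTpe}) applied to a positive-clearance, near-optimal path supplied by $\delta_\text{obs}$-robust feasibility with the graph-search bound of Theorem~\ref{thm:GMTbound}. Your explicit handling of the quantifier order (fixing the clearance before letting $r \to 0$ so that $B(y_m,r) \subset \xfree$ eventually holds) correctly fills in the details the paper leaves implicit.
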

\begin{proof}
The proof of this theorem is conceptually identical to Theorem VI.2 in \cite{SchmerlingJansonEtAl2015}; with high probability as $n\to\infty$ there exists a sequence of waypoints (Theorem~\ref{thm:GMTpe}) tracing an obstacle-clear, near-optimal path of cost $\leq (1+\epsilon)c^\ast$ which \gmt recovers up to a factor $(1+2\dr)$ (Theorem~\ref{thm:GMTbound}).
\end{proof}

We remark that extending the results of this section to differentially-constrained system dynamics or deterministic sampling methods does not substantially change the argument in Theorem~\ref{thm:GMTao} (although in particular, with deterministic sampling the convergence in probability may be replaced with a standard limit). All that is required is an analogue of Theorem~\ref{thm:GMTpe}, a statement on the regularity of the dynamics that implies, with sufficient sample density, that the near-neighbor disk graph contains near-optimal paths. The only \gmt-specific analysis in Theorem~\ref{thm:GMTbound} is a graph search bound---independent of dynamics or sampling methodology. 

\subsection{Numerical Experiments: Suboptimality Introduced}\label{sec:numexp}

To complement the above theoretical bounds, in this subsection we examine the suboptimality incurred in practice through numerical experiments. 
While we will later describe implementation details and timing results in a few representative problems, this section's focus is solely on the amount of suboptimality resulting from the group expansion. 
Our figure of merit is thus only the percentage cost increase compared to \fmt, i.e., from the group expansion.

Table~\ref{table:lambda} lists results for two geometric planning problems over a variety of dimensions (2D to 10D), the first of which is shown in Fig.~\ref{fig:gmt}. 
This obstacle set was mapped to dimensions greater than two by expanding obstacles to fully fill the space.
Fig.~\ref{fig:gmt_3obs_wave}, in particular, shows the wave-like structure of the parallel expansion. 
The second planning problem listed in Table~\ref{table:lambda} is a maze environment requiring exploration in all dimensions.
For each planning problem, the same setup is run with varying $\dr$ (values of $0.2$, $0.5$, and $1.0$) and with sufficiently high sample counts to nearly converge to the optimal.
In each case, the suboptimality is significantly below the proven bound, and often below 5\%.
We observe the expected increase in cost error with increasing $\dr$, and an additional increase with increasing dimension.

\begin{figure}[h!tbp]
\centering
\begin{subfigure}[b]{0.24\textwidth}
    \includegraphics[width=\textwidth]{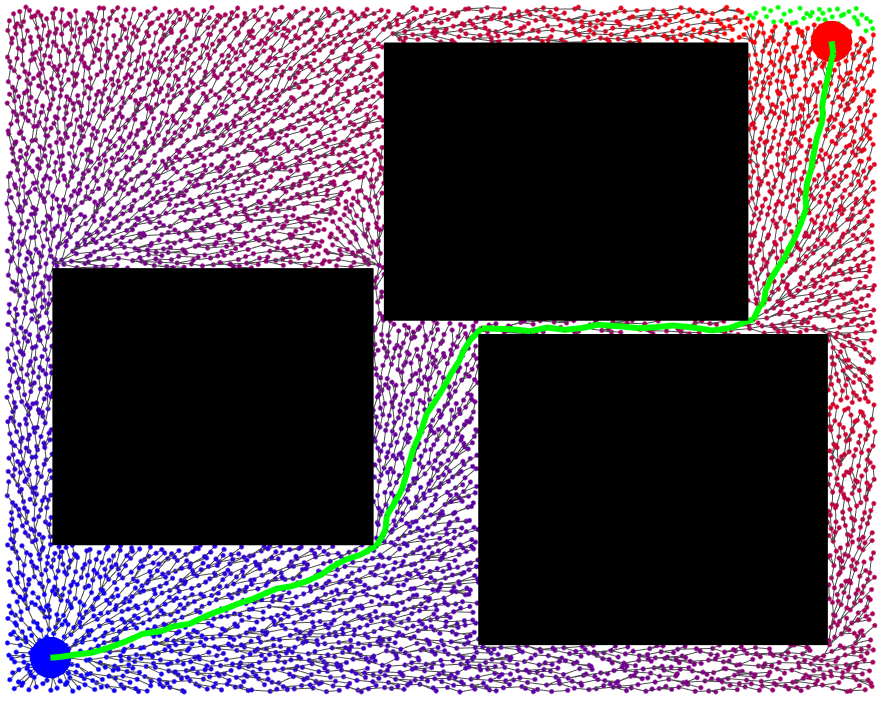}
    \caption{}\label{fig:gmt_3obs_tree}
\end{subfigure}
\begin{subfigure}[b]{0.24\textwidth}
    \includegraphics[width=\textwidth]{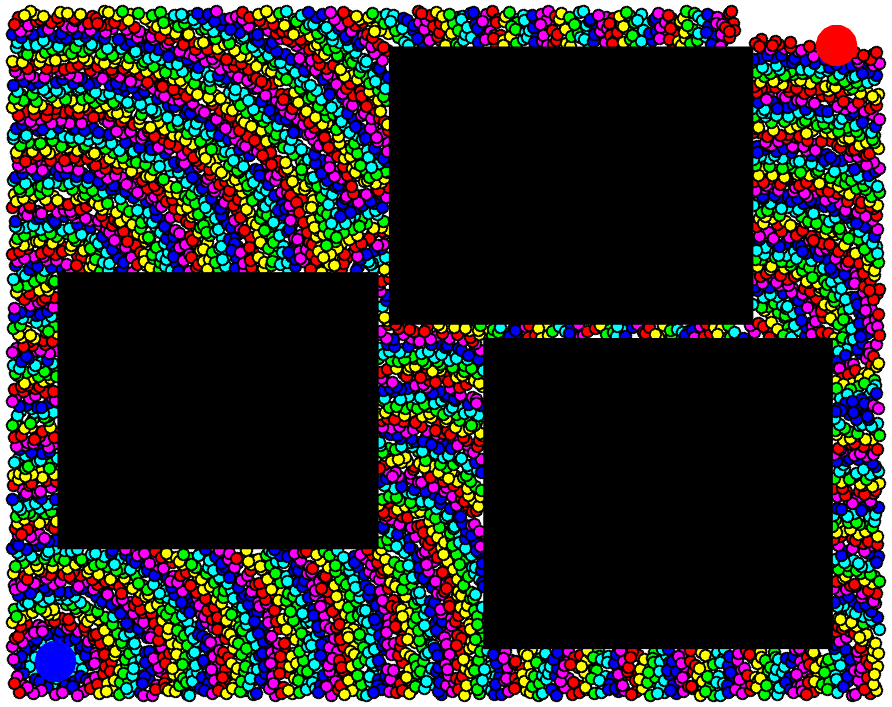}
    \caption{}\label{fig:gmt_3obs_wave}
\end{subfigure}
\caption{Expansion of the \gmt algorithm, where Fig.~\ref{fig:gmt_3obs_tree} shows the resulting tree (colored by cost) and Fig.~\ref{fig:gmt_3obs_wave} shows individual groups (denoted by color) expanded in parallel.
}
\label{fig:gmt}
\end{figure}

\begin{table}[h!tbp]
\begin{center}  
\begin{tabular}{c c| r r r}
  \multicolumn{2}{c}{} & \multicolumn{3}{c}{Cost Error ($c_{\text \gmt}/c_{\text \fmt}-1$)} \\ 
  \hline
  Obstacle & $d$ & $\dr = 0.2$ & $\dr = 0.5$ & $\dr = 1.0$ \\
  \hline
  Rectangles & 2D & 0.2\%  & 0.6\% & 1.8\% \\
  (Fig.~\ref{fig:gmt_3obs_tree}) & 3D & 0.1\% & 0.6\% & 3.4\% \\
   & 6D  & 0.4\% & 1.5\% & 2.1\% \\
   & 10D & 2.0\% & 14.8\% & 17.0\% \\
  \hline 
  Maze & 3D & 0.3\% & 1.5\% & 4.7\% \\ 
   & 5D  & 0.9\% & 4.9\% & 7.8\% \\
  \hline
  \end{tabular}
\end{center}
\caption{Suboptimality introduced by \gmt over \fmt for a range of dimensions $d$ and group cost threshold factors $\dr$. Results are averaged over 50 runs at $n = 5000$ samples. The variance was found to be small.}
\label{table:lambda}
\end{table}

\section{GPU Implementation}\label{sec:gpu}

We begin this section with a brief discussion of GPU architectures, as the ability of \gmt to exploit the computational capabilities of many-core GPUs is fundamental to this work and has driven much of the algorithm design and implementation.
We particularly focus here on the CUDA enabled GPUs used in this work. 
CUDA C functions running on GPUs are organized in a three level thread-hierarchy. 
At the lowest level, threads run in groups of 32 that execute one common instruction at a time, i.e., any divergence will cause branches to execute serially. 
A level above this, thread groups are combined into blocks, each of which can utilize a small, low-latency shared memory block and executes concurrently on the same multiprocessor, but is allowed to diverge without causing serial execution.
Finally, at the highest level, blocks are formed in grids to be dispatched to the device.
A more detailed discussion can be found in \cite{ios_NVIDIA:2016}.

We highlight here three properties of \gmt that, along with the sample-level parallelism, allow efficient application to GPU architectures. 
First, the use of lazy collision checking limits thread divergence at low levels by only attempting to connect new samples to the tree once per iteration.
Second, the design of \gmt is such that the sample set is partitioned into $\vunexplored$, $\vopen$, and $\vclosed$, with a sample always a member of one and only one set, allowing for little overlap of memory access and easy memory representation as Boolean masks. 
Our work accesses these sets with thread identifiers assigned via prefix sums, a strategy described in \cite{MerrillGarlandEtAl2012}. 
The use of this algorithmic primitive allows fast reorganization of sparse and uneven workloads into dense uniform ones.
Third, as the set of samples considered for expansion, $\G$, can be represented as a set of cost-thresholded buckets, there is no need for the use of serial data structures, e.g., min-heaps.

\section{Numerical Experiments}\label{sec:mp}

\subsection{Numerical Experiment Setup}\label{sec:exp}

As our goal is to show planning in changing, uncertain settings with dynamic systems, in this section we apply \gmt to the problem of planning under differential constraints with a 6D double integrator ($\ddot{x} = u$) and a Dubins airplane (Dubins car with altitude \cite{ChitsazLaValle2007}).
The double integrator planning problems consider a mixed time/quadratic control effort cost function, while the Dubins airplane problems consider an Euclidean distance cost function.
The algorithm was implemented in CUDA C (example code may be accessed at \url{github.com/StanfordASL/GMT}) and run on an NVIDIA GeForce GTX 980 GPU on a Unix system with a 3.0 GHz CPU.
We additionally provide comparison with an embeddable GPU, the NVIDIA Jetson TX1, to show these performance gains are similarly available for onboard computation.
Our implementation of \gmt samples the state space using the deterministic, low-dispersion Halton sequence, to achieve best performance, following the discussion in \cite{JansonIchterEtAl2015}. 
Sampling and computation of nearest neighbor connections (edge discretization and neighbor sets) were performed offline in a precomputation phase.

\subsection{Planning Under Differential Constraints}\label{sec:diff}

Through several motion planning problems, detailed in Fig.~\ref{fig:problems} and Table \ref{table:kinoMP}, we demonstrate \gmt achieves one to two orders of magnitude speed up with relatively small performance losses compared to an implementation of \fmt on a CPU \cite{JansonSchmerlingEtAl2015} and an implementation of \prm on a GPU \cite{KaramanFrazzoli2011}.
For each simulation, we pick a value for the connection radius $r$ appropriate for the dynamics \cite{SchmerlingJansonEtAl2015}\cite{SchmerlingJansonEtAl2015b} and set $\dr$ to 1, which we have found allows simple implementation, maximum parallelization, and performance losses on the order of 10\%. 
The obstacles in our simulation are represented by unions of axis-aligned bounding boxes, as commonly used for a broad phase collision checking phase \cite{Lavalle2006}.
This methodology can provide increasingly accurate representations of obstacle sets as more are used (e.g., as in Fig.~\ref{fig:indoor} or with octree-based representations as in Fig.~\ref{fig:quad_caves}).

The first problem (Fig.~\ref{fig:indoor}) was built from point cloud data collected in \cite{ArmeniSenerEtAl2016} for an indoor office environment, with individual environmental elements bounded by boxes. 
The second planning problem (Fig.~\ref{fig:quad_caves}) represents a cave system consisting of two maze-like levels connected by three passageways.
Finally, our third planning problem (Fig.~\ref{fig:forest}) represents a forest environment.
To show the results extend to systems with nonlinear dynamics (and nonholonomic planning), this last simulation uses Dubins airplane dynamics rather than double integrator dynamics. 
Our Dubins airplane consists of a planar Dubins car augmented with a single integrator in the third dimension, with bounded control on turning rate, unbounded altitude control, and a Euclidean distance cost function \cite{ChitsazLaValle2007}.

As shown in Table~\ref{table:kinoMP}, in all problems a solution trajectory is found by \gmt in \texttildelow10~ms, a speed increase of two orders of magnitude over CPU \fmt and one order of magnitude over GPU \prm. 
The algorithm also performs well on the embedded platform, only slowing by a factor of two, compared to \prm, which slows down by approximately a factor of five. 
This demonstrates \gmt's lightweight approach of building a single tree is particularly amenable to onboard computation. 
We also note that the cost increase incurred is less than 12\% for all cases despite the high group cost threshold factor.

\begin{figure}[htbp]
\centering
\begin{subfigure}[b]{0.238\textwidth}
    \includegraphics[width=\textwidth]{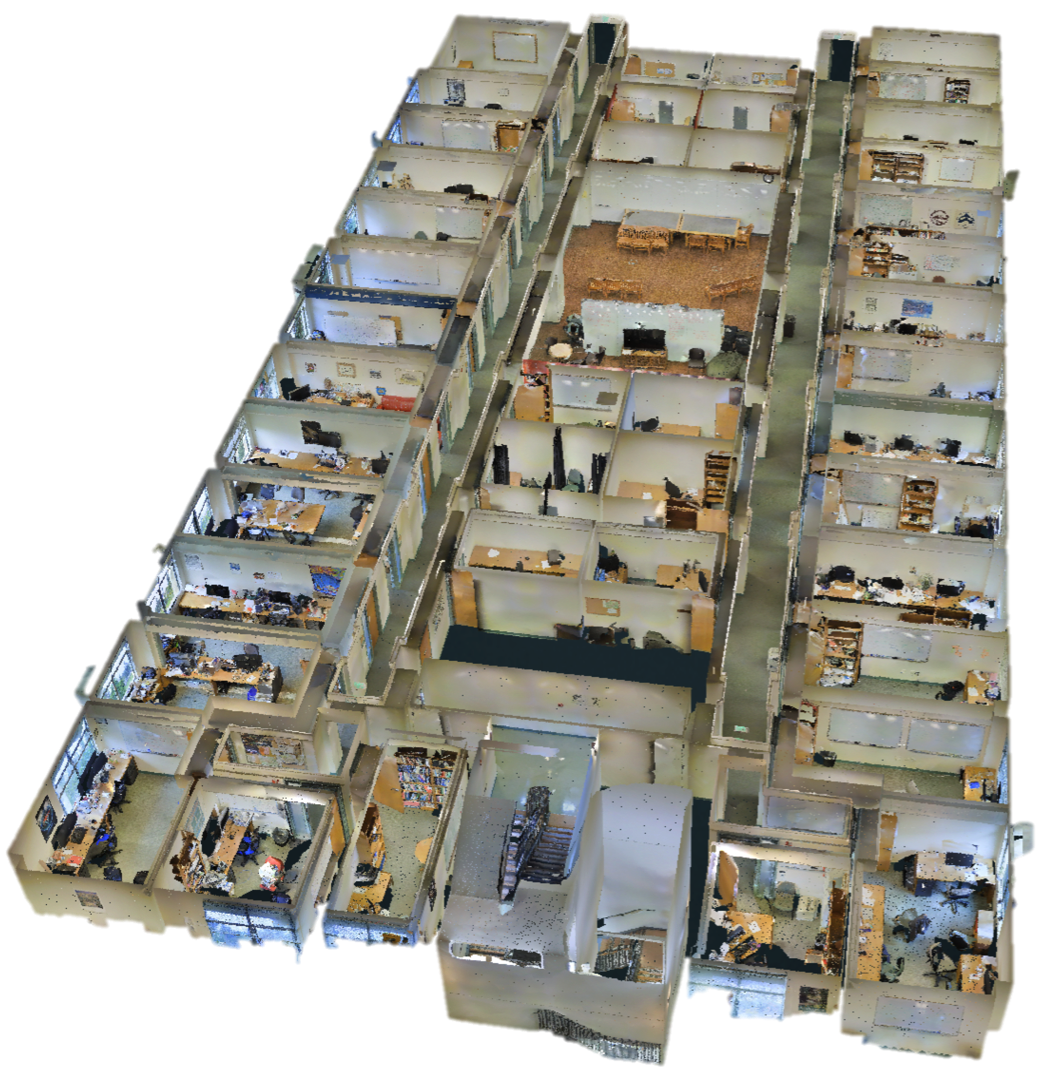}
    \caption{}\label{fig:indoorPointCloud}
\end{subfigure}
\begin{subfigure}[b]{0.244\textwidth}
    \includegraphics[width=\textwidth]{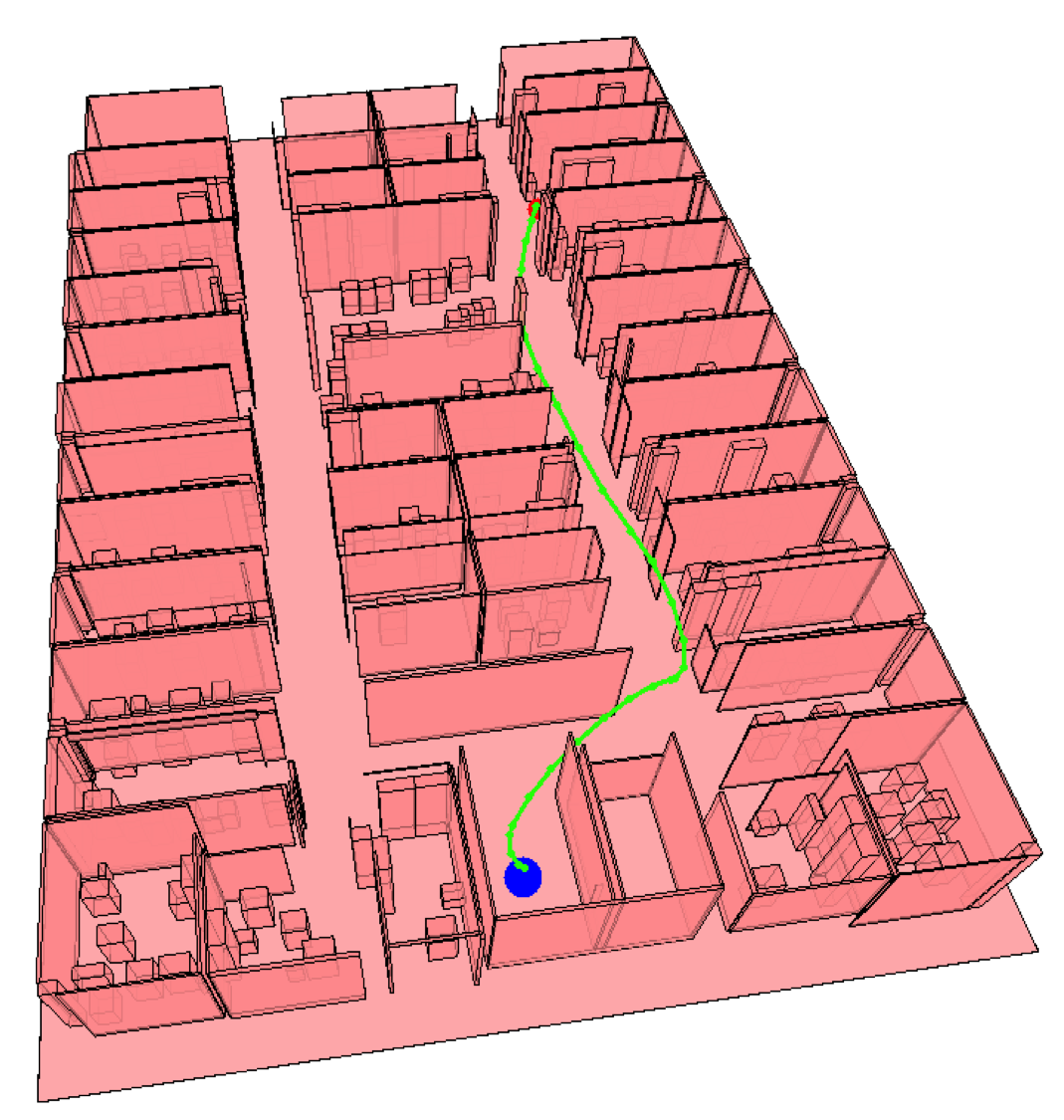}
    \caption{}\label{fig:indoor}
\end{subfigure}
\begin{subfigure}[b]{0.241\textwidth}
    \includegraphics[width=\textwidth]{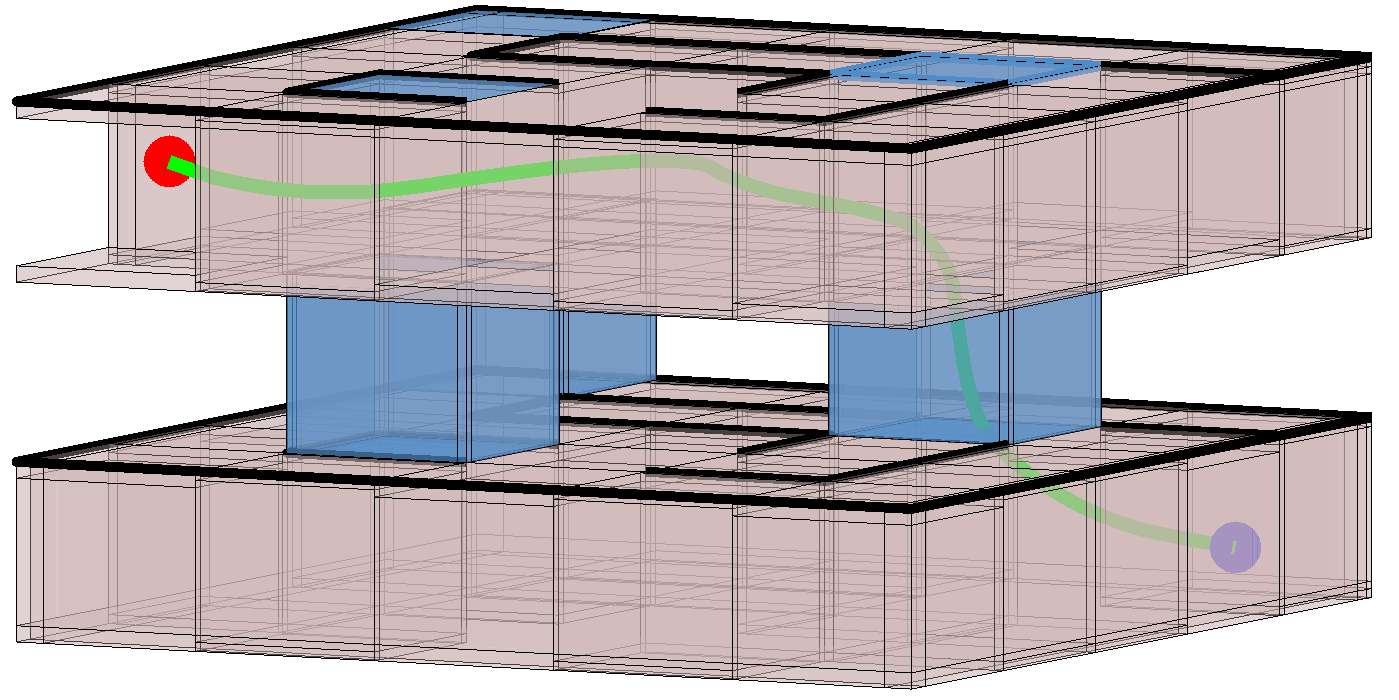}
    \caption{}\label{fig:quad_caves}
\end{subfigure}
\begin{subfigure}[b]{0.241\textwidth}
    \includegraphics[width=\textwidth]{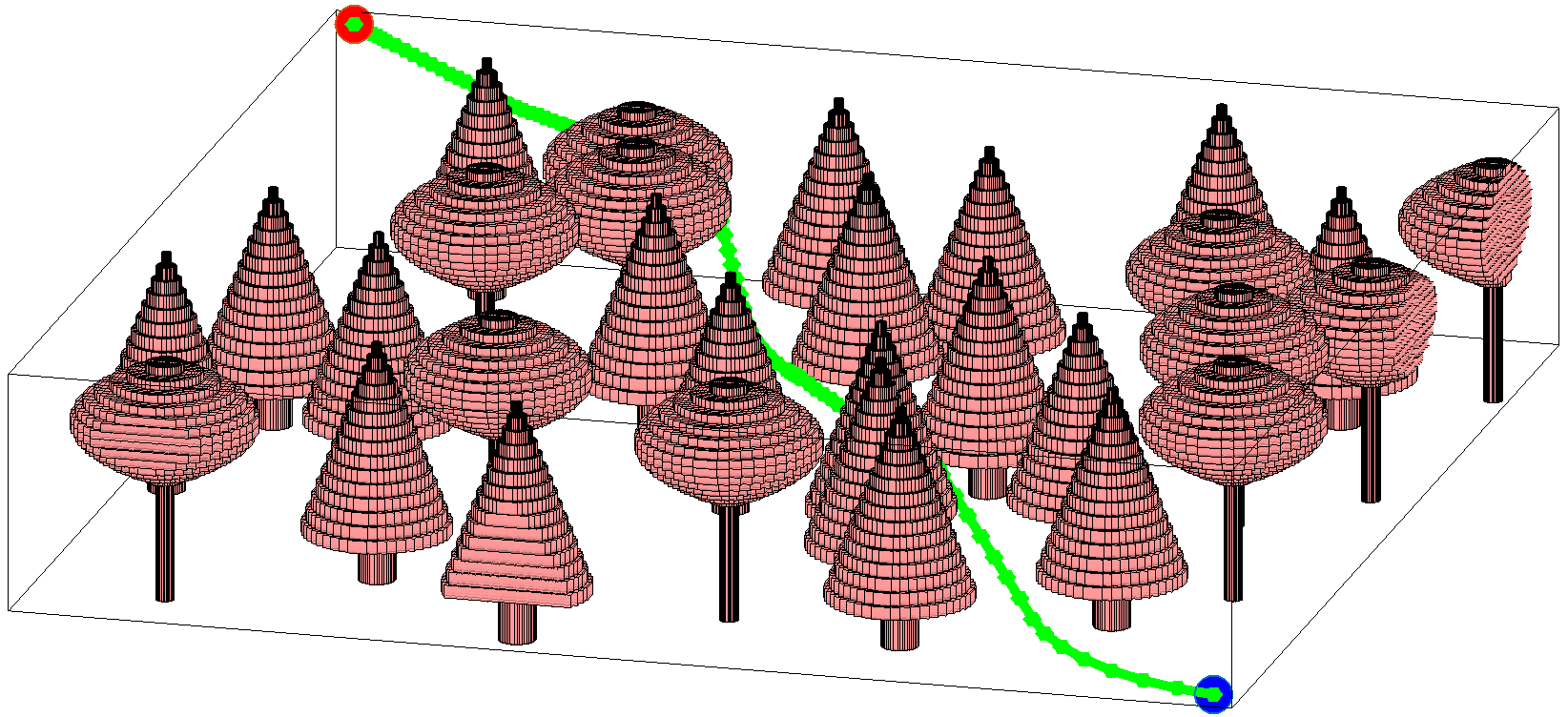}
    \caption{}\label{fig:forest}
\end{subfigure}
\caption{
The solution trajectory connecting $\xinit$ (blue) to $\xgoal$ (red) returned by \gmt is shown in green for all figures.
(\ref{fig:indoorPointCloud}-\ref{fig:indoor}) The indoor environment was generated with point cloud data from \cite{ArmeniSenerEtAl2016} with individual elements bounded by boxes. 
(\ref{fig:quad_caves}) The cave environment consists of two maze-like levels, with wall outlines shown in black, connected by three passageways, shown in blue. 
(\ref{fig:forest}) The forest environment consists of many varying tree obstacles.
}
\label{fig:problems}
\end{figure}

Table~\ref{table:scaling} further demonstrates the algorithm's scaling with sample and obstacle counts.
The small increases in computation time with increasing sample count are a result of the GPU not being fully utilized at every iteration with lower sample counts, i.e., the group size may be too small to use every GPU core.
The obstacle scaling too shows only slight increases in computation time with increased obstacle resolution, approximately doubling for each order of magnitude increase, however, if obstacles and complexity of the space becomes a significant bottleneck, \gmt is amenable to space partitioning structures, e.g., k-d trees, or parallelization at the obstacle level \cite{BialkowskiKaramanEtAl2011}.

\begin{table}[htbp]
\begin{center}  
\begin{tabular}{c c c c c}
  \multicolumn{5}{c}{Double Integrator, Fig.~\ref{fig:indoor}} \\
  \hline
  Algorithm & Device & $c_{\text{alg}}/c_{\text{\gmt}}$ & Time (ms) &  $t_{\text{alg}}/t_{\text{\gmt}}$ \\
  \hline
  \fmt & CPU & 0.91 & 1291 & 129.1 \\
  \prm & Embd. GPU & 0.88 & 735 & 73.5 \\
  \prm & GPU & 0.88 & 158 & 15.8 \\
  \gmt & Embd. GPU & 1 & 27 & 2.7 \\
  \gmt & GPU & 1 & 10 & 1 \\
  \hline
  \multicolumn{5}{c}{\rule{0pt}{2.5ex}    Double Integrator, Fig.~\ref{fig:quad_caves}} \\
  \hline
  Algorithm & Device & $c_{\text{alg}}/c_{\text{\gmt}}$ & Time (ms) &  $t_{\text{alg}}/t_{\text{\gmt}}$ \\
  \hline
  \fmt & CPU & 0.91 & 1490 & 99.3 \\
  \prm & Embd. GPU & 0.90 & 517 & 34.5 \\
  \prm & GPU & 0.90 & 140 & 9.3 \\
  \gmt & Embd. GPU & 1 & 31 & 2.0 \\
  \gmt & GPU & 1 & 15 & 1.0 \\
  \hline
  \multicolumn{5}{c}{\rule{0pt}{2.5ex}    Dubins Airplane, Fig.~\ref{fig:forest}} \\
  \hline
  Algorithm & Device & $c_{\text{alg}}/c_{\text{\gmt}}$ & Time (ms) &  $t_{\text{alg}}/t_{\text{\gmt}}$ \\
  \hline
  \fmt & CPU & 0.95 & 1312 & 218.7 \\
  \prm & Embd. GPU & 0.95 & 945 & 157.5 \\
  \prm & GPU & 0.95 & 96 & 16.0 \\
  \gmt & Embd. GPU & 1 & 11 & 1.8 \\
  \gmt & GPU & 1 & 6 & 1 \\
  \hline
  \end{tabular}
\end{center}
\caption{Results for algorithms run with 5000 samples in the environments in Fig.~\ref{fig:problems}. GPU refers to the GTX 980, while \ Embd. GPU refers to an embeddable Jetson TX1 GPU.}
\label{table:kinoMP}
\end{table}

\begin{table}[htbp]
\begin{center}  
\begin{tabular}{c c c c c}
  \hline
  Fig. & Sample Count ($n$) & Obstacle Count & Time (ms) &  Cost \\
  \hline
  	\ref{fig:quad_caves} & 1k & 300 & 6 & 552 \\
	& 2k & 300 & 8 & 379 \\
	& 5k & 300 & 15 & 290 \\
	& 10k & 300 & 21 & 233 \\
	\hline
	\ref{fig:forest} & 5k & 150 & 7 & - \\ 
	& 5k & 500 & 14 & -  \\ 
	& 5k & 1500 & 18 & -  \\ 
	& 5k & 5000 & 26 & -  \\ 
  \hline
  \end{tabular}
\end{center}
\caption{Results for \gmt scaling with sample and obstacle counts, in terms of cost and computation time. Note the obstacle count represents varying the resolution of the obstacle representations, not varying the obstacle location or class.}
\label{table:scaling}
\end{table}

\subsection{Planning in the Loop}\label{sec:loop}

The numerical experiments in Section~\ref{sec:diff} have shown that it is possible to plan at rates amenable to implementation within control loops by allowing some performance loss in exchange for parallelism.
We now demonstrate that this strategy is beneficial through numerical experiments for a system operating in a dynamic environment with random state disturbances.
The setup mimics a collapsing cave system (Fig.~\ref{fig:quad_caves}), which a quadrotor modeled as a double integrator must escape.
A successful escape requires high performance actions to minimize time spent in the degrading cave as well as actions that account for state disturbances and variations in the environment. 
The collapse is modeled as randomly placed box obstacles added to the environment, with the rates representing the number of obstacles added each second. 
Fig.~\ref{fig:collapsing} shows the success rate over 50 runs of a quadrotor using a waypoint tracking controller, which tracks trajectories generated with \fmt and \gmt (replanning as quickly as possible). 
The results for \fmt show that, as expected, success rate decreases quickly with increased noise and environmental degradation. 
Replanning with \gmt, however, shows little variation in failure rate with increased noise level.
The results further show significantly higher success rates for replanning with \gmt than replanning with \fmt.
Note that these planning problems may not be possible to solve for every instance, as the collapses can happen anywhere within the environment (and very quickly), potentially trapping the quadrotor. 
Experiment videos are available at \url{https://goo.gl/67RSsp}. 
\begin{figure}[htbp]
\centering
    \includegraphics[width=0.48\textwidth]{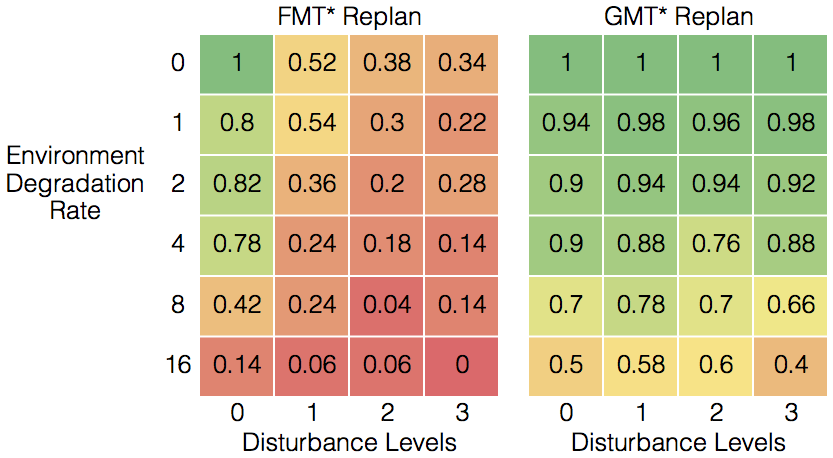}
\caption{Success rate of a quadrotor replanning with \fmt or \gmt in the Fig.~\ref{fig:quad_caves} environment with varying levels of state disturbances and cave collapse rates (simulated with obstacles randomly placed in the environment).}
\label{fig:collapsing}
\end{figure}

\section{Conclusion}\label{sec:concl}

We have introduced and analyzed a novel planning algorithm, the Group Marching Tree algorithm (\gmt), that trades off parallelism for optimality in order to leverage GPU hardware.
The computational speed of \gmt allows us to approach the problem of planning in real-world settings---particularly focusing on the uncertain, dynamic environments that naturally arise from active robot sensing and the uncertain, disturbed motion of systems in the field---by replanning at rates commensurate with the control loop frequency.
Simulation results show planning times on the order of 10~ms (for a 6D double integrator and Dubins airplane) and demonstrate the efficacy of planning at these rates in difficult environments.

This paper leaves several important research avenues open. Foremost, we plan to validate this approach experimentally on a platform with state and environmental sensing. 
We further plan to provide a more detailed theoretical analysis of \gmt, such as providing time and space complexity analysis and potentially proving tighter suboptimality bounds.
We additionally plan to explore extensions to other planning paradigms, which the computational speed of \gmt may enable. 
To merge planning and game playing, we plan to use \gmt as a default simulation policy when many actions must be considered, such as in algorithms like Monte Carlo tree search. 
We also plan to show that \gmt may be used to construct policies in decision making frameworks through fast approximation of the cost-to-go. 
Finally, we plan to demonstrate extensions to planning with a probabilistic state belief by utilizing a backwards search in cost-to-go space.
In this way we can define actions over regions of the state space, with the same computation times shown above, and select actions from criteria such as best worst-case or maximum expected performance.

\bibliographystyle{IEEEtran-short}
{
\renewcommand{\baselinestretch}{0.92}
\bibliography{../../../../bib/main,../../../../bib/ASL_papers}
}

\end{document}